\DeclareMathOperator*{\argmin}{arg\,min}
\newtheorem{theorem}{\bf Theorem} 
 \newtheorem{remark}[theorem]{\bf Remark}
\title{\LARGE \bf
Training robust neural networks using Lipschitz bounds
}
\author{Patricia Pauli, Anne Koch, Julian Berberich, Paul Kohler and Frank Allg\"ower
\thanks{This work was funded by Deutsche Forschungsgemeinschaft (DFG,
German Research Foundation) under Germany’s Excellence Strategy - EXC
2075 - 390740016. The authors thank the International Max Planck Research
School for Intelligent Systems (IMPRS-IS) for supporting Patricia Pauli, Anne Koch, and Julian Berberich.}
\thanks{Patricia Pauli, Anne Koch, Julian Berberich, Paul Kohler and Frank Allg\"ower are with the Institute for Systems Theory and Automatic Control, University of Stuttgart, Germany
{\tt\small patricia.pauli@ist.uni-stuttgart.de}}      
}
\begin{document}

\maketitle
\thispagestyle{empty}
\pagestyle{empty}

\begin{abstract}
\textbf{Due to their susceptibility to adversarial perturbations, neural networks (NNs) are hardly used in safety-critical applications. One measure of robustness to such perturbations in the input is the Lipschitz constant of the input-output map defined by an NN. In this work, we propose a framework to train multi-layer NNs while at the same time encouraging robustness by keeping their Lipschitz constant small, thus addressing the robustness issue. More specifically, we design an optimization scheme based on the Alternating Direction Method of Multipliers that minimizes not only the training loss of an NN but also its Lipschitz constant resulting in a semidefinite programming based training procedure that promotes robustness. We design two versions of this training procedure. The first one includes a regularizer that penalizes an accurate upper bound on the Lipschitz constant. The second one allows to enforce a desired Lipschitz bound on the NN at all times during training. Finally, we provide two examples to show that the proposed framework successfully increases the robustness of NNs.}
\end{abstract}


\section{Introduction}
Neural networks (NNs) and deep learning have lately been successful in many fields \cite{krizhevsky2012imagenet,collobert2011natural}, where they are mostly used for classification and segmentation problems, as well as in reinforcement learning \cite{lecun2015deep}. The main advantages of NNs are that they can be trained straightforwardly using backpropagation and as universal function approximators they have the capability to represent complex nonlinearities. While NNs are powerful and broadly applicable they lack rigorous guarantees which is why they are not yet applied to safety-critical applications such as medical devices and autonomous driving. Adversarial attacks can easily deceive an NN by adding imperceptible perturbations to the input \cite{szegedy2013intriguing} which is a problem that has recently been tackled increasingly in a number of different ways such as adversarial training \cite{goodfellow2014explaining} and defensive distillation \cite{papernot2016distillation}. Another promising approach is to show that an NN is provably robust against norm-bounded adversarial perturbations \cite{wong2018provable, madry2017towards}, e.g. by maximizing margins \cite{tsuzuku2018lipschitz}, while yet another one is to use Lipschitz constants as a robustness measure that indicate the sensitivity of the output to perturbations in the input \cite{combettes2020lipschitz}. There are a number of other regression methods that provide guaranteed and optimized upper bounds on the Lipschitz constant such as nonlinear set membership predictions, kinky inference \cite{calliess2016lazily} and Lipschitz interpolation \cite{calliess2017lipschitz, maddalena2019learning}. Based on this notion of Lipschitz continuity, we propose a framework for training of robust NNs that encourages a small Lipschitz constant by including a regularizer or respectively, a constraint on the NN's Lipschitz constant. 

Trivial Lipschitz bounds of NNs can be determined by the product of the spectral norms of the weights \cite{szegedy2013intriguing} which is used during training in \cite{cisse2017parseval, gouk2018regularisation}. Similar to \cite{hein2017formal}, we use the Lipschitz constant as a regularization functional. However, they use local Lipschitz constants whereas we penalize the global one. In \cite{fazlyab2019efficient}, Fazlyab et al. propose an interesting new estimation scheme for more accurate upper bounds on the Lipschitz constant than the weights' spectral norms exploiting the structure of the nonlinear activation functions. Activation functions are gradients of convex potential functions, and hence monotonically increasing functions with bounded slopes, which is used in \cite{fazlyab2019efficient} to state the property of \emph{slope-restriction} as an incremental quadratic constraint and then formulate a semidefinite program (SDP) that determines an upper bound on the Lipschitz constant. In \cite{fazlyab2019efficient}, three variants of the Lipschitz constant estimation framework are proposed trading off accuracy and computational tractability. In this paper, we disprove by counterexample the most accurate approach presented in \cite{fazlyab2019efficient}, and we employ the other approaches for training of robust NNs. More specifically, we include the SDP-based Lipschitz bound characterization of \cite{fazlyab2019efficient} in the training procedure via an Alternating Direction Method of Multipliers (ADMM) scheme. We present two versions of the training method, (i) a regularizer rendering the Lipschitz constant small and (ii) enforces guaranteed upper bounds on the Lipschitz constant during training.

The main contributions of this manuscript are the two training procedures for robust NNs based on the notion of Lipschitz continuity, using a tight upper bound on the Lipschitz constant. In addition, we show that the method for Lipschitz constant estimation for NNs  that was recently proposed in \cite{fazlyab2019efficient} requires a modification for the least conservative choice of decision variables. This manuscript is organized as follows. In Section~\ref{sec:LipschitzEstimation}, we introduce Lipschitz constant estimation for NNs based on \cite{fazlyab2019efficient} but disprove their most accurate Lipschitz estimator. In Section~\ref{sec:ProposedApproach}, we present a training procedure with Lipschitz regularization and outline the setup of the optimization problem that is solved using ADMM. Subsequently, we introduce a variation of the proposed procedure that allows to enforce Lipschitz bounds on the NN and finally, we discuss the convergence and the computational tractability of the ADMM scheme. In Section~\ref{sec:simulationresults}, we provide two examples on which we successfully apply the proposed training procedures.

\section{Lipschitz constant estimation}\label{sec:LipschitzEstimation}
In this section, we briefly introduce robustness in the context of neural networks. We then state a method to estimate the Lipschitz constant of an NN based on \cite{fazlyab2019efficient} and finally argue why one of the methods for Lipschitz constant estimation proposed in \cite{fazlyab2019efficient} is incorrect.
\subsection{Robustness of NNs}
A robust NN should not change its prediction if the input is perturbed imperceptibly. To quantify robustness, a suitable robustness measure has to be defined. One definition is that perturbations from a norm-bounded uncertainty set may not alter the prediction. Alternatively, using probabilistic approaches, random  perturbations do not change the prediction with a certain probability. A third alternative is the Lipschitz constant, a sensitivity measure. A function $f:\mathbb{R}^n\rightarrow\mathbb{R}^m$ is globally Lipschitz continuous if there exists an $L\geq0$ such that
\begin{equation}\label{eq:Lip}
\left\lVert f(x)-f(y)\right\rVert\leq L \left\rVert x-y\right\rVert\quad\forall x,y\in\mathbb{R}^n.
\end{equation} 
The smallest $L$ for which \eqref{eq:Lip} holds is the Lipschitz constant $L^*$. If the input changes from $x$ to $y$, the Lipschitz constant gives an upper bound on how much the output $f$ changes. Hence, a low Lipschitz constant indicates low sensitivity which is equivalent to high robustness. In this work, we aim to minimize the Lipschitz constant or respectively bound the Lipschitz constant from above during training to increase the robustness of the resulting NN. 

Regularization, i.e., adding a penalty term to the objective function of the NN, is a prevalent measure in NN training in order to prevent overfitting. L2 regularization penalizes the squared norm of the weights and L1 regularization the weights' L1 norm. Bounding the weights counteracts the fit of sudden peaks and outliers, promotes better generalization, and smoothens the resulting NN \cite{krogh1992simple}. Furthermore, the product of the spectral norms of the weights provides a trivial bound on an NN's Lipschitz constant and consequently, L1 and L2 regularization improve the robustness of an NN in the sense of Lipschitz continuity. In this paper, we penalize a more accurate estimate of the Lipschitz constant, leading to a more direct and potentially more effective approach.

\subsection{Lipschitz constant estimation}\label{sec:Lip_est}
In the following, we outline a method to estimate bounds on the Lipschitz constant of multi-layer NNs exploiting the slope-restricted structure of the nonlinear activation functions, as it was shown in \cite{fazlyab2019efficient}. This method named LipSDP yields more accurate bounds than trivial bounds, i.e., the product of the spectral norms of the weights.

Continuous nonlinear activation functions $\varphi:\mathbb{R}\rightarrow\mathbb{R}$ can be interpreted as gradients of continuously differentiable, convex potential functions and consequently, they are slope-restricted, i.e., their slope is at least $\alpha$ and at most $\beta$, 
\begin{equation}\label{eq:slope-restriction}
\alpha\leq\frac{\varphi(x)-\varphi(y)}{x-y}\leq\beta\qquad\forall x,y\in\mathbb{R},
\end{equation}
where $0\leq\alpha<\beta<\infty$. Consider the vector of activation functions $\phi_i:\mathbb{R}^{n_{i}}\rightarrow\mathbb{R}^{n_{i}}, \phi_i(x^i)=[\varphi(x_1^i)\cdots\varphi(x_{n_i}^i)]^\top$ and a fully-connected feed-forward NN with $l$ hidden layers $f:\mathbb{R}^{n_0}\rightarrow\mathbb{R}^{n_{l+1}}$ described by the following equations
\begin{equation}\label{eq:NN}
\begin{split}
x^0&=x,\\
x^{i+1}&=\phi_{i+1}(W^ix^i+b^i),\quad i=0,\dots, l-1,\\
f(x)&=W^lx^l+b^l,
\end{split}
\end{equation}
where $W^i\in\mathbb{R}^{n_{i+1}\times n_i}$ are the weight matrices and $b^i\in\mathbb{R}^{n_{i+1}}$ are the biases of the $i$-th layers, $n_0,\cdots,n_{l+1}$ being the dimension of the input, the neurons in the hidden layers, and the output. For every neuron, the slope-restriction property \eqref{eq:slope-restriction} can be written as an incremental quadratic constraint. Weighting these quadratic constraints with $\lambda_{ii}\geq0$, or respectively, all activations with a diagonal weighting matrix
\begin{equation*}
T\in\mathcal{D}_n:=\{ T \in\mathbb{R}^{n\times n} \mid T=\sum_{i=1}^{n}\lambda_{ii}e_ie_i^\top, \lambda_{ii}\geq0\},
\end{equation*}
results in an incremental quadratic constraint for the stacked activations:
\begin{equation}\label{eq:Tcomb}
\begin{bmatrix} \tilde{x}-\tilde{y} \\
\phi(\tilde{x})-\phi(\tilde{y})
\end{bmatrix}^\top
\hspace{-0.08cm}\underbrace{\begin{bmatrix}
-2\alpha\beta T & (\alpha+\beta)T\\
(\alpha+\beta)T & -2T 
\end{bmatrix}}_{:=F(T)}
\hspace{-0.06cm}\begin{bmatrix} \tilde{x}-\tilde{y} \\
\phi(\tilde{x})-\phi(\tilde{y})
\end{bmatrix}
\hspace{-0.06cm}\geq\hspace{-0.06cm}0,
\end{equation}
for all $T\in\mathcal{D}_n$, $\tilde{x},\tilde{y}\in\mathbb{R}^n$ with $n=\sum_{i=1}^l n_i$. Herein, the inputs to all neurons are stacked up in one vector $\tilde{x}, \tilde{y}$, respectively, and the activation function $\phi:\mathbb{R}^n\to\mathbb{R}^n,~\phi(\tilde{x})=[\phi_1(x^1)^\top\cdots\phi_l(x^{l})^\top]^\top$ is then applied to the concatenated vector. 
We now formulate an SDP based on \cite{fazlyab2019efficient} that exploits \eqref{eq:Tcomb} for the estimation of an upper bound on the Lipschitz constant of the map characterized by the underlying NN. 
We therefore define
\begin{equation*}
A=
\begin{bmatrix}
W^0&\dots&0&0\\
\vdots&\ddots&\vdots&\vdots\\
0&\dots&W^{l-1}&0
\end{bmatrix},~
B=
\begin{bmatrix}
0 & I_n
\end{bmatrix}.
\end{equation*}

\begin{theorem}\label{thm:LipSDP}
Suppose there exist $L^2>0$, $T\in\mathcal{D}_{n}$ such that
\begin{align}\label{eq:LMIwithT}
P_l(L^2,T)\preceq 0,
\end{align}
where
\begin{equation*}
P_l(L^2,T):= \begin{bmatrix} A\\B\end{bmatrix}^\top F(T) \begin{bmatrix}A\\B\end{bmatrix} +\begin{bmatrix}-L^2I& 0 & 0\\ 0& 0 & 0 \\ 0 & 0 & {W^l}^\top W^l \end{bmatrix}\hspace{-0.08cm}.
\end{equation*}

Then, \eqref{eq:NN} is globally Lipschitz continuous with Lipschitz bound $L\geq L^*$.
\end{theorem}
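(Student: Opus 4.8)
The plan is to collapse the matrix inequality \eqref{eq:LMIwithT} into the scalar estimate \eqref{eq:Lip} by pre- and post-multiplying it with a single vector that gathers the internal signals of two network evaluations. First I would fix arbitrary inputs $x,y\in\mathbb{R}^{n_0}$, run both through \eqref{eq:NN}, and stack the resulting layer signals into $\xi=[{x^0}^\top\cdots{x^l}^\top]^\top$ and $\eta=[{y^0}^\top\cdots{y^l}^\top]^\top$, each of dimension $n_0+n$. The matrices $A$ and $B$ are constructed precisely so that $A(\xi-\eta)$ equals the stacked differences of the neuron inputs $W^ix^i+b^i$ (the biases cancelling in the difference), while $B(\xi-\eta)$ equals the stacked differences of the activation outputs. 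Writing $\tilde x,\tilde y$ for the stacked pre-activation vectors, this yields the key identity
\[
\begin{bmatrix}\tilde x-\tilde y\\ \phi(\tilde x)-\phi(\tilde y)\end{bmatrix}=\begin{bmatrix}A\\B\end{bmatrix}(\xi-\eta).
\]

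Next I would bring in the slope-restriction of the activations. Since each neuron obeys \eqref{eq:slope-restriction}, the stacked incremental quadratic constraint \eqref{eq:Tcomb} applies to the vector above for every $T\in\mathcal{D}_n$, which after substitution becomes
\[
(\xi-\eta)^\top\begin{bmatrix}A\\B\end{bmatrix}^\top F(T)\begin{bmatrix}A\\B\end{bmatrix}(\xi-\eta)\geq0.
\]
I would then left- and right-multiply the hypothesis $P_l(L^2,T)\preceq0$ by $(\xi-\eta)^\top$ and $(\xi-\eta)$. The quadratic form of the additive block in $P_l(L^2,T)$ decomposes, according to its partition, into $-L^2\|x^0-y^0\|^2$ from the top-left block and $(x^l-y^l)^\top {W^l}^\top W^l(x^l-y^l)$ from the bottom-right block, the middle block contributing nothing.

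Finally I would identify the surviving terms with the quantities of interest. Here $x^0-y^0=x-y$, and since $f(x)-f(y)=W^l(x^l-y^l)$ by \eqref{eq:NN}, the bottom-right term equals $\|f(x)-f(y)\|^2$. Combining the negative semidefiniteness of $P_l(L^2,T)$ with the nonnegativity of the slope term gives $\|f(x)-f(y)\|^2-L^2\|x-y\|^2\leq0$, which is exactly \eqref{eq:Lip} with constant $L$; as $x,y$ were arbitrary and $L^*$ is by definition the smallest admissible Lipschitz constant, $L\geq L^*$ follows. I do not expect any deep inequality to be the sticking point; the main obstacle is the bookkeeping in the first step, namely checking the precise block dimensions of $\xi,\eta,A,B$ and confirming that the biases cancel exactly, so that the identity relating the stacked signals to $A(\xi-\eta)$ and $B(\xi-\eta)$ is genuinely an equality rather than just plausible.
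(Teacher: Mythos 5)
Your proof is correct and is precisely the standard argument that the paper itself does not spell out but delegates to A.4 of the extended version of Fazlyab et al.: stack the two forward passes, note that the biases cancel so that $A(\xi-\eta)$ and $B(\xi-\eta)$ reproduce the incremental pre- and post-activation signals, invoke the diagonal-$T$ incremental quadratic constraint, and sandwich $P_l(L^2,T)\preceq0$ with $\xi-\eta$ to obtain $\lVert f(x)-f(y)\rVert^2\leq L^2\lVert x-y\rVert^2$. The bookkeeping you flag (block dimensions and bias cancellation) indeed checks out, so there is no gap.
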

The proof directly follows from A.4 in \cite{fazlyab2019efficient2}, the extended version of \cite{fazlyab2019efficient}. Note that this result differs from Theorem 2 in \cite{fazlyab2019efficient} as $T$ in our case is a diagonal matrix. As we show in Section \ref{sec:counter_example}, the result does not hold in general for the larger parametrization of $T$ provided in \cite{fazlyab2019efficient}. The smallest value for the Lipschitz upper bound is determined by solving the SDP
\begin{equation}\label{eq:SDP}
\min_{L^2, T}L^2~\text{s.\! t.}\quad P_l(L^2,T)\preceq0,~T\in\mathcal{D}_{n},
\end{equation}
where $T$ and $L^2$ serve as decision variables. 

\begin{remark}
In the case of one hidden layer the matrix $P_1$ reduces to
\begin{equation*}
P_1(L^2,T)\hspace{-0.08cm}=\hspace{-0.13cm}\begin{bmatrix} -2\alpha \beta {W^0}^\top TW^0-L^2 I\hspace{-0.25cm} & (\alpha+\beta){W^0}^\top T \\
    (\alpha+\beta)TW^0 &-2T+{W^1}^\top W^1\end{bmatrix}\hspace{-0.08cm}.
\end{equation*}
\end{remark}

\subsection{Counterexample for LipSDP with coupling}\label{sec:counter_example}
In \cite{fazlyab2019efficient}, three versions of the method LipSDP for Lipschitz constant estimation are stated that reconcile accuracy of the Lipschitz bound and computational complexity of the method by adjusting the number of the decision variables in the SDP. In this section, we give an illustrative counterexample to show that Theorems 1 and 2 in \cite{fazlyab2019efficient} are incorrect for the most accurate variant of LipSDP.

Theorem 2 in \cite{fazlyab2019efficient} resembles Theorem \ref{thm:LipSDP} of this manuscript with the difference that in \cite{fazlyab2019efficient} a set of symmetric coupling matrices
\begin{equation*}
\begin{split}
\mathcal{T}_n=&\{T\in\mathbb{R}^{n\times n} \mid T=\sum_{i=1}^{n}\lambda_{ii}e_ie_i^\top\\&+\sum_{1\leq i<j\leq n}\lambda_{ij}(e_i-e_j)(e_i-e_j)^\top, \lambda_{ij}\geq0\},
\end{split}
\end{equation*}
is introduced whereas we state Theorem \ref{thm:LipSDP} for a set of diagonal matrices $\mathcal{D}_n$.
In the following, we give a minimal counterexample to show that, as suggested in this manuscript, a further restriction of the class of $T$ is required. For that purpose, consider an NN with one hidden layer of size $n_1=2$, input and output size $n_0=n_2=1$, activation function $\tanh$ and weights and biases
\begin{align*}
W^0=\begin{bmatrix} -1 \\ -1\end{bmatrix},~b^0=\begin{bmatrix}-1\\  1\end{bmatrix},~W^1=\begin{bmatrix}-1 & 1\end{bmatrix},~b^1=-0.5.
\end{align*}
The resulting NN provides a good fit for the cosine function on $x\in[-\frac{\pi}{2},\frac{\pi}{2}]$ with a maximum deviation in the output of 0.0843. Therefore, the maximum slope of the cosine gives a good approximation of the Lipschitz constant of this NN, which is $\pm 1$ at $x=\pm\frac{\pi}{2}$, such that $L^*\approx 1$. However, the linear matrix inequality (LMI) \eqref{eq:LMIwithT} is feasible for arbitrarily small $L^2$ and $T=(e_1-e_2)(e_1-e_2)^\top\in\mathcal{T}_n$.
An arbitrarily small $L$ is obviously no upper bound on the Lipschitz constant of a cosine like function, thus contradicting Theorem 1 in \cite{fazlyab2019efficient}.

To understand why in this case the LipSDP method fails to provide an upper bound on the Lipschitz constant, we look into the coupling of the neurons accounted for by $T\in\mathcal{T}_n$.  Theorems 1 and 2 in \cite{fazlyab2019efficient} build on the assumption that \eqref{eq:Tcomb} holds for all $T\in\mathcal{T}_n$ (Lemma 1 in \cite{fazlyab2019efficient}). In the following, we show by counterexample that Lemma 1 in \cite{fazlyab2019efficient} is incorrect, i.e., that for a given slope-restricted function $\varphi$ there are $\tilde{x}, \tilde{y}\in\mathbb{R}^n$ and $T\in\mathcal{T}_n$ that violate \eqref{eq:Tcomb}.  
We choose $n_1=2$, $\varphi$ as the $ReLU$ function that is slope-restricted in the sector $[0,1]$, and $T=(e_1-e_2)(e_1-e_2)^\top\in\mathcal{T}_n$. Let $x=\begin{bmatrix}0 & 1\end{bmatrix}^\top$ and $y=\begin{bmatrix}-1.5 & 0 \end{bmatrix}^\top$. Then evaluating \eqref{eq:Tcomb} yields 
\begin{equation}
\begin{bmatrix}
\tilde{x}-\tilde{y}\\
\phi(\tilde{x})-\phi(\tilde{y})
\end{bmatrix}^\top
\begin{bmatrix}
0 & T \\ T & -2T
\end{bmatrix}
\begin{bmatrix}
\tilde{x}-\tilde{y}\\
\phi(\tilde{x})-\phi(\tilde{y})
\end{bmatrix}=-2<0,\nonumber
\end{equation}
which disproves Lemma~1 of \cite{fazlyab2019efficient}.

\section{Training robust NNs}\label{sec:ProposedApproach}
In Section \ref{sec:Lip_est}, we stated a method that provides certificates on an NN's Lipschitz constant. In this section, we employ these certificates to design a training procedure for robust NNs. The proposed approach allows us to directly regularize the Lipschitz constant during training, which is only possible indirectly in regularization methods such as L2 regularization. We present two versions of it, the first one allows for minimization of the upper bound on the Lipschitz constant and the second one allows to enforce a desired bound on the Lipschitz constant.
\subsection{Weights as decision variables}\label{sec:weights}
Eq. \eqref{eq:SDP} can be used to assess an NN's robustness after training, whereas in this manuscript, to promote robustness during training, we use Eq. \eqref{eq:SDP} to update the weights while minimizing the bound on the Lipschitz constant. Applying the Schur complement to \eqref{eq:LMIwithT} for $\alpha=0$, the LMI can be rearranged, yielding an equivalent LMI that is linear in $L^2$ and $W=(W^0,\cdots,W^{l})$, for fixed $T\in\mathcal{D}_{n}$:
\begin{equation} \label{eq:LMI}
\begin{split}
M_l(L^2,W):=&\begin{bmatrix} A\\B\end{bmatrix}^\top \begin{bmatrix}
0 & \beta T\\
\beta T & -2T 
\end{bmatrix} \begin{bmatrix}A\\B\end{bmatrix} \\ &+\begin{bmatrix}-L^2I& 0 & 0 & 0\\ 0& 0 & 0 & 0\\ 0 & 0 & 0 & {W^l}^\top \\ 0 & 0 & W^l & -I \end{bmatrix}\hspace{-0.08cm}\preceq0 
\end{split}
\end{equation}

\begin{remark}
For $\alpha>0$, the underlying constraint is not convex in $W$. Consequently, we cannot state LMI constraints for $\alpha>0$ and instead set $\alpha=0$. This is a conservative choice for some activation functions, yet the tight lower bound for the most common ones. E.g. for $\mathrm{ReLU}$ and $\tanh$ the tight bounds are $\alpha=0$ and $\beta=1$ and the sigmoid function is slope-restricted with $\alpha=0$ and $\beta=\frac{1}{4}$.
\end{remark}
\begin{remark}
For the single-layer case, $M_1$ conveniently reduces to
\begin{equation*} \label{eq:LMI2}
M_1(L^2,W)=\begin{bmatrix} -L^2 I & \beta{W^0}^\top T & 0\\
    \beta T W^0 &-2T & {W^1}^\top \\
    0 & W^1 & -I\end{bmatrix} .
\end{equation*}
\end{remark}

While the Lipschitz constant estimation scheme in \cite{fazlyab2019efficient} optimizes over $T$,  throughout the manuscript, we choose $T$ to be a fixed matrix. This introduces conservatism into the framework and necessitates a suitable choice for $T$ in order to keep the introduced conservatism to a minimum. For instance, the matrix $T$ may be determined from the Lipschitz constant estimation outlined in Section~\ref{sec:Lip_est} on the vanilla NN or the L2 regularized NN trained on the same problem.

\subsection{Lipschitz regularization}\label{sec:ADMM}
In general, NNs are trained on input-output data with the objective of minimizing a predefined loss, 
e.g. the mean squared error, cross-entropy, or hinge loss. We propose to not only minimize the NN's loss but also its Lipschitz constant. This yields an optimization problem with two separate objectives that can be solved conveniently using ADMM.

ADMM is an algorithm that solves optimization problems by splitting them into smaller subproblems that are easier to handle individually \cite{boyd2011distributed}. In order to apply the ADMM algorithm, the objective must be separable. The resulting subobjectives are then defined on uncoupled convex sets and are subject to linear equality constraints. The ADMM scheme solves the resulting optimization problem through independent minimization steps on the augmented Lagrangian of the optimization problem and a dual update step. The objectives at hand, i.e., the NN's loss and the Lipschitz bound, are indeed separable and defined on uncoupled convex sets. However, the problems are not completely independent and need to be connected through a linear constraint that requires the introduction of additional variables $\bar{W}=(\bar{W}^0,\dots,\bar{W}^l)$ of equal size as $W$. The loss of the NN $\mathcal{L}(W)$ is an explicit function of the weights $W$ and the Lipschitz bound $L$ depends on $\bar{W}$ through the LMI \eqref{eq:LMI}, yielding the following optimization problem:
\begin{equation}
\begin{split}\label{eq:opt_prob}
\min_{W,\bar{W},L^2} \qquad &\mathcal{L}(W)+\mu L^2 +\textbf{1}_{M_l}(L^2,\bar{W})\\
\text{s.\! t.} \qquad &W=\bar{W}
\end{split}
\end{equation}
where $\mu>0$ is a weighting parameter adjusting the trade-off between accuracy and robustness and
\begin{equation*}
\textbf{1}_M(L^2,\bar{W})=\begin{cases}
0& \text{if}~M_l(L^2,\bar{W})\preceq0 \\
\infty &\text{if}~M_l(L^2,\bar{W})\succ0
\end{cases}.
\end{equation*} 
is the indicator function. Applying the ADMM scheme to problem \eqref{eq:opt_prob}, results in the augmented Lagrangian function
\begin{equation*}
\begin{split}
\mathcal{L}_{\rho}(W,\bar{W},L^2,Y) := \mathcal{L}(W)+\mu L^2+\textbf{1}_{M_l}(L^2,\bar{W})\\+\mathrm{tr}(Y(W-\bar{W}))+\frac{\rho}{2}\left\lVert W-\bar{W}\right\rVert^2
\end{split}
\end{equation*}
with Lagrange multipliers $Y^i\in\mathbb{R}^{n_{i}\times n_{i+1}}$, $Y=(Y^0,Y^{1})$ and the penalty parameter $\rho>0$. The optimum for \eqref{eq:opt_prob} is then determined via the following iterative ADMM update steps:
\begin{subequations}\label{eq:update1}
\begin{align}
    W_{k+1}=&\argmin_{W} \mathcal{L}_{\rho}(W,\bar{W}_k,L_k^2,Y_k) \label{eq:loss_update}\\
    (L_{k+1}^2, \bar{W}_{k+1})=&\argmin_{L^2,\bar{W}} \mathcal{L}_{\rho}(W_{k+1},\bar{W},L^2,Y_k) \label{eq:Lip_update}\\
    Y_{k+1}=&Y_{k}+\rho(W_{k+1}-\bar{W}_{k+1}).
\end{align} 
\end{subequations}

For training of robust NNs, we carry out the corresponding updates consecutively until convergence. The loss function is optimized analytically using backpropagation (Eq. \eqref{eq:loss_update}) whereas the Lipschitz update step \eqref{eq:Lip_update} is an SDP, as implementation of the indicator function corresponds to an LMI constraint. Hence, the Lipschitz update step requires to solve an SDP in every iteration and thereby adds additional computations compared to the training of a vanilla NN.

\begin{remark}
It is possible to extend the framework and optimize over $L^2$, $T$, and $W$ at the same time which requires a second LMI constraint in \eqref{eq:opt_prob} and an additional update step in \eqref{eq:update1}, resulting in a multi-block ADMM scheme. This reduces conservatism but increases computation time.
\end{remark}

\subsection{Enforcing Lipschitz bounds}\label{sec:Lipschitz-bounds}
In Section \ref{sec:ADMM}, we suggested to minimize an upper bound on the Lipschitz constant of an NN. Using the proposed ADMM framework, it is also possible to enforce a desired upper bound on the Lipschitz constant during training of an NN. In that case, the Lipschitz constant is not minimized but instead set to a desired value $L_{\mathrm{des}}$. Judiciously, $L^2$ does not appear in the optimization objective of this setup
\begin{equation}\label{eq:feasibility}
\begin{split}
\min_{W,\bar{W}} \qquad &\mathcal{L}(W) +\textbf{1}_{M_l}(L_\textrm{des}^2,\bar{W})\\
\text{s.\! t.} \qquad &W=\bar{W}
\end{split}
\end{equation}
where $W$ and $\bar{W}$ serve as decision variables. For enforcement of Lipschitz bounds, we apply the ADMM algorithm as in \eqref{eq:update1} to \eqref{eq:feasibility} instead of \eqref{eq:opt_prob}, i.e., we set $L=L_\mathrm{des}$ instead of optimizing over $L^2$.


\begin{theorem}\label{thm:fesisbility}
When training a fully-connected NN with $l$ hidden layers \eqref{eq:NN} by executing the ADMM scheme \eqref{eq:update1} for \eqref{eq:feasibility}, $L_{\mathrm{des}}$ is an upper bound on the Lipschitz constant for the NN with weights $\bar{W}$ at all times during training.
\end{theorem}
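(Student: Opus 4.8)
The plan is to exploit the role of the indicator function $\textbf{1}_{M_l}$ in the augmented Lagrangian and to combine it with the Lipschitz certificate of Theorem~\ref{thm:LipSDP}. The central observation is that the claim concerns the auxiliary weights $\bar{W}$, not $W$: only $\bar{W}$ is coupled to the LMI constraint \eqref{eq:LMI} through the indicator function, so it is precisely $\bar{W}$ that inherits the guaranteed bound, whereas the loss-minimizing iterate $W$ obtained from \eqref{eq:loss_update} carries no such certificate.

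First I would examine the $\bar{W}$-update \eqref{eq:Lip_update}. Since $\mathcal{L}_{\rho}$ contains the term $\textbf{1}_{M_l}(L_\mathrm{des}^2,\bar{W})$, any $\bar{W}$ with $M_l(L_\mathrm{des}^2,\bar{W})\succ0$ yields an infinite objective value. Hence every minimizer $\bar{W}_{k+1}$ returned by \eqref{eq:Lip_update} must satisfy the LMI $M_l(L_\mathrm{des}^2,\bar{W}_{k+1})\preceq0$; otherwise it could not be an $\argmin$. I would also record that the feasible set of this LMI is nonempty, since the zero weights realise a constant map with Lipschitz constant $0$ and are therefore feasible for any $L_\mathrm{des}>0$ and any $T\in\mathcal{D}_n$. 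Consequently the minimum is finite and attained at a feasible point in each iteration.

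Next I would translate the satisfied LMI back into the Lipschitz certificate. Recall from Section~\ref{sec:weights} that \eqref{eq:LMI} is the Schur complement of \eqref{eq:LMIwithT} for $\alpha=0$, so that $M_l(L_\mathrm{des}^2,\bar{W}_{k+1})\preceq0$ is equivalent to $P_l(L_\mathrm{des}^2,T)\preceq0$ for the fixed $T\in\mathcal{D}_n$. By Theorem~\ref{thm:LipSDP}, this equivalence certifies that the NN \eqref{eq:NN} instantiated with the weights $\bar{W}_{k+1}$ is globally Lipschitz continuous with Lipschitz bound $L_\mathrm{des}\geq L^*$.

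Since the argument applies verbatim at every iteration $k$, the condition $M_l(L_\mathrm{des}^2,\bar{W}_k)\preceq0$ holds after each execution of \eqref{eq:Lip_update}, and therefore $L_\mathrm{des}$ upper-bounds the Lipschitz constant of the NN with weights $\bar{W}$ throughout training. I expect the only step requiring genuine care to be the verification that the $\argmin$ in \eqref{eq:Lip_update} actually lands on the feasible branch of the indicator rather than being undefined; this is precisely what the nonemptiness of the LMI feasible set resolves, by forcing a finite minimum and hence a feasible $\bar{W}_{k+1}$. The remainder is a direct chaining of the ADMM structure with the Schur-complement equivalence and Theorem~\ref{thm:LipSDP}.
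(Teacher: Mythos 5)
Your proposal is correct and follows essentially the same argument as the paper, which simply observes that the LMI constraint $M_l(L_{\mathrm{des}}^2,\bar{W})\preceq0$ is enforced by design in every Lipschitz update step \eqref{eq:Lip_update}. You add the supporting details the paper leaves implicit (nonemptiness of the feasible set so that the $\argmin$ lands on the feasible branch of the indicator, and the Schur-complement equivalence back to Theorem~\ref{thm:LipSDP}), but the route is the same.
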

\begin{proof}
Theorem \ref{thm:fesisbility} follows from the fact that, by design, the bound $L_{\mathrm{des}}$ on the Lipschitz constant is enforced in every iteration of the ADMM scheme, more specifically, in every Lipschitz update step \eqref{eq:Lip_update} by the LMI constraint $M_l(L_{\mathrm{des}}^2,\bar{W})\preceq0$ on the weights $\bar{W}$.
\end{proof}

The training procedure based on \eqref{eq:feasibility} allows to choose the value of the Lipschitz bound and to train NNs with Lipschitz guarantees. This way, a desired degree of robustness can be directly enforced. However, the choice of such a constraint on $L$ is always connected to the trade-off between accuracy and robustness, as the fit generally deteriorates when decreasing the Lipschitz constant constraint.
In addition, it is helpful to initialize the weight parameters appropriately which does not only accelerate training but may also facilitate a better fit. 

\subsection{Convergence}\label{sec:convergence}
ADMM was first introduced for optimization problems with convex subobjectives \cite{boyd2011distributed} and later, analyses of the ADMM scheme for non-convex objectives including further structural assumptions on the objective were formulated \cite{wang2019global}. In the proposed framework, the loss $\mathcal{L}(W)$ clearly is not convex and has no obvious structural properties which renders a thorough convergence analysis complicated and beyond the scope of this work. However, looking at the subproblems \eqref{eq:loss_update} and \eqref{eq:Lip_update} separately, we point out that for \eqref{eq:loss_update} gradient descent almost surely converges to local minima even for non-convex problems \cite{lee2016gradient}, and that \eqref{eq:Lip_update} is a semidefinite program with a unique minimizer. Thus adding the convex regularization term and the indicator function of a convex set to the optimization problem of NN training, that converges reliably, does not add complexity in the form of non-convexity to the optimization problem.

\subsection{Computational tractability}\label{sec:computational_tractibility}
Computational tractability and scalability of the proposed framework depend on the number of decision variables of the SDP. Generally, the complexity of SDP solvers scales cubically with the number of decision variables. 
Hence, for high-dimensional decision variables, solving an SDP is computationally more expensive than solving an unconstrained optimization problem using gradient descent. Therefore, the Lipschitz update step \eqref{eq:Lip_update} becomes the bottleneck of the proposed method as the number of neurons per hidden layers and the number of hidden layers, that together determine the size of the weights, increases.
For example for picture inputs as commonly used in classification problems, the input dimension is usually high, potentially leading to high computation times or computational intractability. Nevertheless, downscaling the input using convolutional or pooling layers provides an option to improve computation time on larger scale problems and makes the proposed method indeed a worthwhile one to infer robustness to a neural network and obtain guarantees on the Lipschitz bound, also on large-scale problems. In Section \ref{sec:simulationresults}, we show in an example that our method is applicable to MNIST, a typical benchmark classification problem.

In order to keep the number of Lipschitz update steps to a minimum, we advise to first fully train a neural network without regularizers or with standard regularizers, such as the L2 regularizer, which serves as an initialization of the matrix $T$ and the weights $W$, $\bar{W}$. Loosely speaking, our method provides a refinement of the pretrained neural network and allows to subsequently optimize robustness or impose robustness guarantees on the network by minimizing or respectively, by enforcing an upper bound on the Lipschitz constant.

\section{Simulation Results}\label{sec:simulationresults}
\begin{figure}
     \centering
     \vspace{0.2cm}
     \begin{subfigure}[b]{0.228\textwidth}
         \centering
         \includegraphics[width=\textwidth]{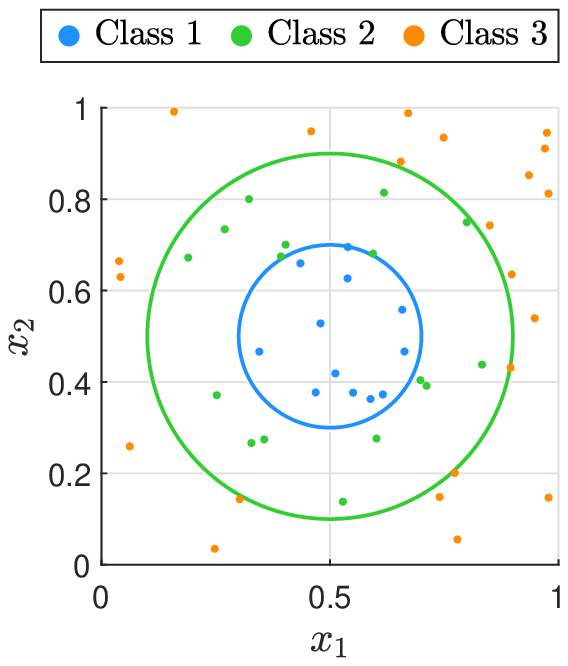}
         \caption{2D data}
         \label{fig:2D_setup}
     \end{subfigure}
     \begin{subfigure}[b]{0.242\textwidth}
         \centering
         \includegraphics[width=\textwidth]{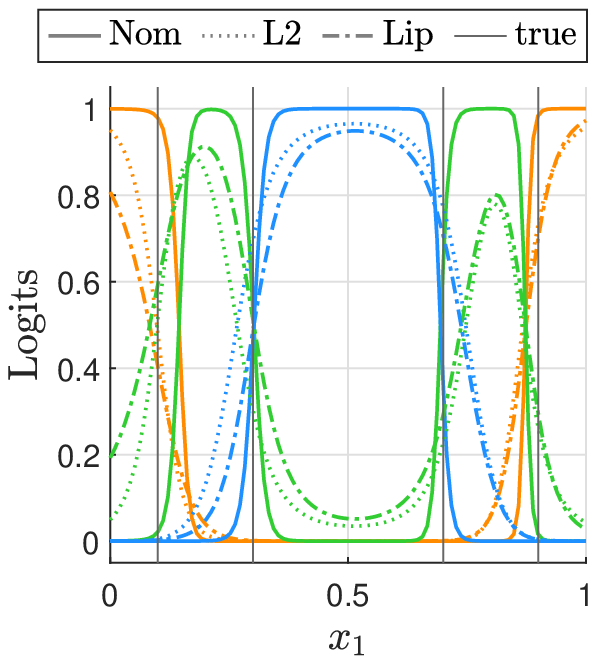}
         \caption{Logits at $x_2=0.5$}
         \label{fig:Logits}
     \end{subfigure}
     \caption{2D classification example.}
     \vspace{-0.2cm}
\end{figure}
In this section, we illustrate the benefits of the presented framework for training of robust NNs on a 2D toy example and on MNIST.
Our first illustrative example is a classification problem of 2D data with three classes, shown in Fig.~\ref{fig:2D_setup}. We design a feed-forward NN  with two hidden layers of $n_1=n_2=10$ neurons each, activation function $\tanh$ that is slope-restricted with $\alpha=0$, $\beta=1$, and the cross entropy loss (CEL) as the loss function. For comparison, we train three NNs, a vanilla NN, an NN with L2 regularization (L2-NN) for benchmarking and finally the Lipschitz regularized NN (Lipschitz-NN) according to Section \ref{sec:Lip_est}, wherein the NN loss update step~\eqref{eq:loss_update} is solved using stochastic gradient descent and the SDP~\eqref{eq:Lip_update} is solved using numerical SDP solvers \cite{Lofberg2004,mosek}.
Before training, we initialize the Lipschitz-NN with the L2 regularized NN with penalty parameter $\lambda=4\times10^{-3}$. The hyperparameters $\rho=0.25$ and $\mu=1\times10^{-5}$ are chosen such that for comparability the Lipschitz constants of the L2-NN and the Lipschitz-NN are roughly the same. The resulting cross-entropy losses, accuracies on test data and bounds on the Lipschitz constant, that are summarized in Table \ref{tb:CEL_Acc_L}, show that the nominal NN achieves a small CEL, yet a high Lipschitz bound of $242$. Comparing the two regularizers, we see that Lipschitz regularization here leads to both a lower Lipschitz bound $L$, hence higher robustness, and a lower CEL than L2 regularization. 
Even though, due to the trade-off between accuracy and robustness, the CEL of the Lipschitz-NN is higher than the CEL of the nominal NN, the accuracy of the Lipschitz-NN is not compromised. On the contrary, the Lipschitz-NN even provides the highest accuracy, as the nominal NN tends to overfit the data whereas the L2 regularized NN fails to provide a good fit in this example. Fig.~\ref{fig:Logits} shows a projection at $x_2=0.5$ of the logits resulting from the three NNs for all three classes (blue, green, and orange) onto the $x_1$ dimension. We clearly see the effect of a lower Lipschitz constant in the less steep slopes of the curves while the decision boundaries of the Lipschitz-NN remain accurate. The price to pay for the improved robustness is an increase in computation time (compare Table \ref{tb:CEL_Acc_L}), since training of the Lipschitz-NN requires several computationally involved ADMM iterations. Note that for low-dimensional problems the Lipschitz update step \eqref{eq:Lip_update} is faster than the loss update step \eqref{eq:loss_update}, yet it becomes computationally expensive for high-dimensional problems (cf. Section \ref{sec:computational_tractibility}).

\begin{table}[hb]
\begin{center}
\caption{CEL, accuracy, Lipschitz bound, training times}\label{tb:CEL_Acc_L}
\begin{tabular}{cccccc}
& & CEL & Accuracy & $L$ & $\bar{t}_\textrm{tr}$ (loss/Lip step)\\\hline
&Nom-NN & 0.07 & 88.66\% & 242 & 12.6s\\
2D ex. &L2-NN & 0.27 & 86.10\% & 69.5 & 16.9s\\
&Lip-NN & 0.22 & 90.56\% &  67.2 & 531s (41.3s/0.63s)\\ \hline
&Nom-NN   & 0.09 & 96.65\% & 96.6 & 56.4s\\
MNIST&L2-NN  & 0.26 & 90.58\% & 9.49 & 57.4s\\
&Lip-NN & 0.20 & 96.45\% & 8.74 & 2566s (365s/111s)\\ \hline
\end{tabular}
\end{center}
\end{table}

\begin{figure}
     \centering
     \vspace{0.2cm}
     \begin{subfigure}[b]{0.23\textwidth}
         \centering
         \includegraphics[width=\textwidth]{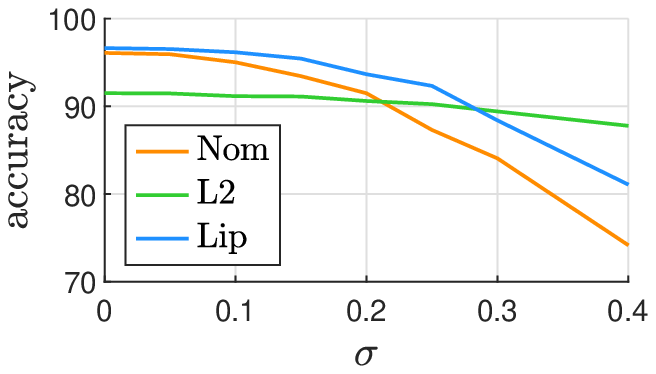}
         \caption{Gaussian noise}
         \label{fig:GaussNoise}
     \end{subfigure}
     \begin{subfigure}[b]{0.23\textwidth}
         \centering
         \includegraphics[width=\textwidth]{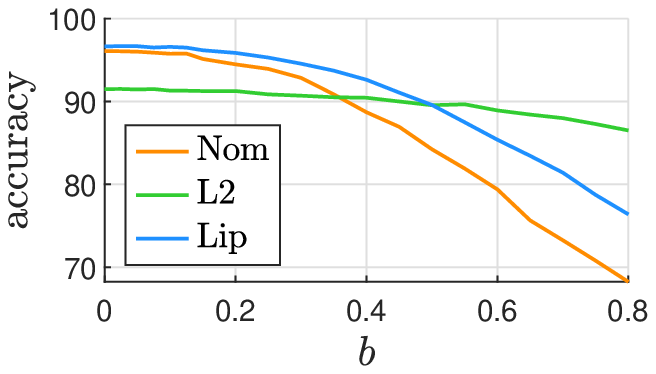}
         \caption{Uniform noise}
         \label{fig:UniformNoise}
     \end{subfigure}
     \caption{Accuracy on noise corrupted MNIST test data.}
     \label{fig:Noise}
     \vspace{-0.2cm}
\end{figure}

In our second example, we apply the Lipschitz regularization framework to the high-dimensional benchmark data set MNIST \cite{lecun1998mnist}.
We train a feed-forward NN with one pooling layer, one hidden layer with $n_1=50$ neurons, the activation function  $\tanh$, the cross entropy loss (CEL) and again, we train three NNs, a vanilla NN, an L2-NN, and a Lipschitz-NN with $\rho=0.25$ and $\mu=0.01$, initializing the Lipschitz-NN from the L2-NN with penalty parameter $\lambda=3\times10^{-3}$. The input dimension of the data is $28\times 28$ that is downscaled by the pooling layer to an input size of $14\times 14$. Note that the method also works on the $28\times 28$ data without a pooling layer, yet the Lipschitz step becomes significantly more time-consuming. From the resulting accuracies on test data and bounds on the Lipschitz constant shown in Table~\ref{tb:CEL_Acc_L}, we conclude that on MNIST our framework also finds an NN with a low Lipschitz constant but comparable accuracy to the nominal NN, while the accuracy of an L2-NN with a comparably low Lipschitz constant is significantly compromised. Fig. \ref{fig:Noise} shows the evaluation of the three NNs on noise corrupted data, that was created by adding Gaussian noise $\mathcal{N}(0,\sigma^2)$ and respectively, uniform noise $\mathcal{U}(-b,b)$ to the $(0,1)$-normalized MNIST data. Advantages of the Lipschitz-NN become apparent for Gaussian noise with low standard deviation and noise from a narrow uniform distribution.

Altogether, the results show that Lipschitz regularization can be used to effectively train robust NNs while trading off robustness and accuracy.
Code to reproduce the examples can be found at \emph{https://github.com/st157640/Training-robust-neural-networks-using-Lipschitz-bounds.git}.

\section{Conclusion}
We proposed a framework for training of multi-layer NNs that encourages robustness, by both considering Lipschitz regularization and by enforcing Lipschitz bounds during training. The underlying SDP \cite{fazlyab2019efficient} estimates the upper bound on the Lipschitz constant more accurately than traditional methods as it exploits the fact that activation functions are slope-restricted. We designed an optimization scheme based on this SDP that trains an NN to fit input-output data and at the same time increases its robustness in terms of Lipschitz continuity. We used ADMM to solve the underlying optimization problem and to therein conveniently incorporate the trade-off between accuracy and robustness. In addition, we presented a variation of the framework that allows for bounding the Lipschitz constant by a desired value, i.e., training NNs with robustness guarantees. We successfully tested our method on two examples where we benchmarked it with L2 regularization.

Next steps include the application of our method to control problems by using LMI constraints to verify and enforce properties, such as closed-loop stability, on feedback interconnections that include NN controllers. Also, we plan to explore alternatives for the ADMM algorithm that solve the underlying optimization problem in an accelerated manner. In addition, for benchmarking purposes, we plan to compare the proposed methods to other training procedures that improve robustness.

\bibliographystyle{IEEEtran}
\bibliography{references}

\end{document}